\documentclass[twoside,leqno,onecolumn]{article}

\usepackage[letterpaper]{geometry}

\usepackage{siamproceedings}

\usepackage[table]{xcolor}
\usepackage[T1]{fontenc}
\usepackage{amsfonts}
\usepackage{graphicx}
\usepackage{epstopdf}
\usepackage{enumitem}
\usepackage{algorithmic}
\ifpdf

  \DeclareGraphicsExtensions{.eps,.pdf,.png,.jpg}
\else
  \DeclareGraphicsExtensions{.eps}
\fi

\newsiamremark{remark}{Remark}
\newsiamremark{hypothesis}{Hypothesis}
\crefname{hypothesis}{Hypothesis}{Hypotheses}
\newsiamthm{claim}{Claim}

\usepackage{amsopn}

\DeclareMathOperator*{\argmin}{arg\,min}
\usepackage{booktabs}

\usepackage{multirow}

\begin{document}

\title{\LARGE Low-Rank Compression of Pretrained Models via Randomized Subspace Iteration}
\author{Farhad Pourkamali-Anaraki\\Department of Mathematical and Statistical Sciences, University of Colorado Denver\\
Email: \email{farhad.pourkamali@ucdenver.edu}}


\date{}

\maketitle

\fancyfoot[C]{\thepage}



\begin{abstract} The massive scale of pretrained models has made efficient compression essential for practical deployment. Low-rank decomposition based on the singular value decomposition (SVD) provides a principled approach for model reduction, but its exact computation is expensive for large weight matrices. Randomized alternatives such as randomized SVD (RSVD) improve efficiency, yet they can suffer from poor approximation quality when the singular value spectrum decays slowly, a regime commonly observed in modern pretrained models.
In this work, we address this limitation from both theoretical and empirical perspectives. First, we establish a connection between low-rank approximation error and predictive performance by analyzing softmax perturbations, showing that deviations in class probabilities are controlled by the spectral error of the compressed weights. Second, we demonstrate that RSVD is inadequate, and we propose randomized subspace iteration (RSI) as a more effective alternative. By incorporating multiple power iterations, RSI improves spectral separation and provides a controllable mechanism for enhancing approximation quality.
We evaluate our approach on both convolutional networks and transformer-based architectures. Our results show that RSI achieves near-optimal approximation quality while outperforming RSVD in predictive accuracy under aggressive compression, enabling efficient model compression.
\end{abstract}

\section{Introduction.} 
Over the past decade, the field of deep learning has witnessed a rapid increase in the size of neural networks driven by the availability of large-scale datasets. This trend has led to the emergence of pretrained models, where large neural networks are trained on massive datasets at scale and often made available to the broader community \cite{varshney2019pretrained,bommasani2021opportunities}. As of early 2026, Hugging Face hosts over 2.5 million pretrained models \cite{huggingface_models}. These models reduce or eliminate the need for training from scratch. By fine-tuning or adapting pretrained weights, users can apply these models to a wide range of downstream tasks across domains such as computer vision, natural language processing, and scientific applications \cite{parisi2022unsurprising,hu2022lora}.

Despite their success, pretrained models introduce significant practical challenges due to their scale \cite{tu2024overview,banyongrakkul2025release}. They often contain hundreds of millions to hundreds of billions of parameters, resulting in substantial storage and memory requirements. This creates a significant bottleneck in resource-constrained environments, including mobile devices, embedded systems, and edge platforms, where computational resources are constrained \cite{lin2022device,liu2024lightweight,emami2025llm}. Consequently, developing efficient compression techniques is essential for enabling the widespread deployment of pretrained models \cite{liang2026comprehensive}.

Existing approaches to model compression can be broadly categorized into four classes: quantization, pruning, knowledge distillation, and low-rank decomposition \cite{gupta2022compression,zhu2024survey,saha2024compressing,liu2025survey}. Quantization methods reduce the memory footprint of a model by representing weights and activations with reduced bit-width. Pruning techniques, on the other hand, aim to remove less important or low-magnitude components of a network, such as individual weights, neurons, or structured blocks, thereby producing sparser, more efficient networks \cite{cheng2024survey}.

Knowledge distillation takes a different approach, where a smaller student model is trained to mimic the behavior of a larger teacher model \cite{cho2019efficacy,wang2026end}. This is  achieved by defining an auxiliary loss function that encourages the student to match the softened output distributions (e.g., softmax probabilities) of the teacher. Knowledge can also be transferred through intermediate feature representations, enabling the student to capture richer structural information from the teacher.

Rounding out this taxonomy, low-rank decomposition methods exploit the observation that many weight matrices in deep neural networks exhibit approximate low-rank structure \cite{idelbayev2020low,feng2022rank,thamm2022random}. A key advantage of this approach is that low-rank matrix factorization, particularly via the singular value decomposition (SVD), is well understood both theoretically and in practice \cite{udell2019big}. 

To illustrate this, consider a weight matrix in the classifier head of a pretrained neural network for image classification tasks, denoted by $W \in \mathbb{R}^{C \times D}$, where $C$ is the number of classes and $D$ is the dimensionality of the input samples after the feature extraction modules. Storing this matrix requires $\mathcal{O}(CD)$ parameters, whereas a rank-$k$ approximation uses only $\mathcal{O}((C + D)k)$ parameters, which can be substantially smaller when $k \ll \min(C, D)$.

However, computing the exact SVD of large weight matrices is computationally expensive and memory-intensive, particularly for modern pretrained models with high-dimensional layers \cite{pourkamali2018randomized}.
In particular, when 
$D>C$, the computational complexity of the SVD scales as $\mathcal{O}(DC^2)$.
This challenge is further amplified when compression must be applied across multiple layers. 

To address this limitation, randomized methods for low-rank approximation have emerged as scalable alternatives to classical deterministic algorithms \cite{martinsson2020randomized,derezinski2024recent}. These methods leverage random projections to capture the dominant subspace of a matrix, enabling the computation of low-rank approximations at significantly reduced computational cost compared to exact SVD.

A prominent example of this class is the randomized SVD (RSVD) \cite{halko2011finding}, which constructs low-dimensional subspace approximations via multiplication of the weight matrix $W$ by a small random projection matrix. A subsequent multiplication by $W^T$ refines this approximation and enables recovery of a low-rank approximation of $W$. RSVD has been widely adopted, including recent work applying it to extract the leading singular vectors of pretrained weight matrices \cite{meng2024pissa}.

While RSVD has been shown to provide accurate low-rank approximations in many applications, its performance can degrade significantly in regimes where the spectral decay is slow, a phenomenon commonly observed in pretrained models. In such cases, the singular value spectrum does not exhibit a sharp drop, making it more challenging for standard RSVD to accurately capture the dominant subspace.

This limitation is illustrated in Figure \ref{fig:demo}(a), where we examine a representative hidden layer from the VGG19 architecture \cite{simonyan2014very} and compute its exact SVD. Although the spectrum initially decays rapidly, it transitions to a much slower decay regime. This behavior is further reflected in Figure \ref{fig:demo}(b), which reports the normalized spectral error, defined as the spectral norm of the approximation error at rank $k$, normalized by the 
$(k+1)$-th singular value. The results demonstrate a noticeable degradation in approximation quality.

\begin{figure}[ht!]
    \centering
\includegraphics[width=0.4\linewidth]{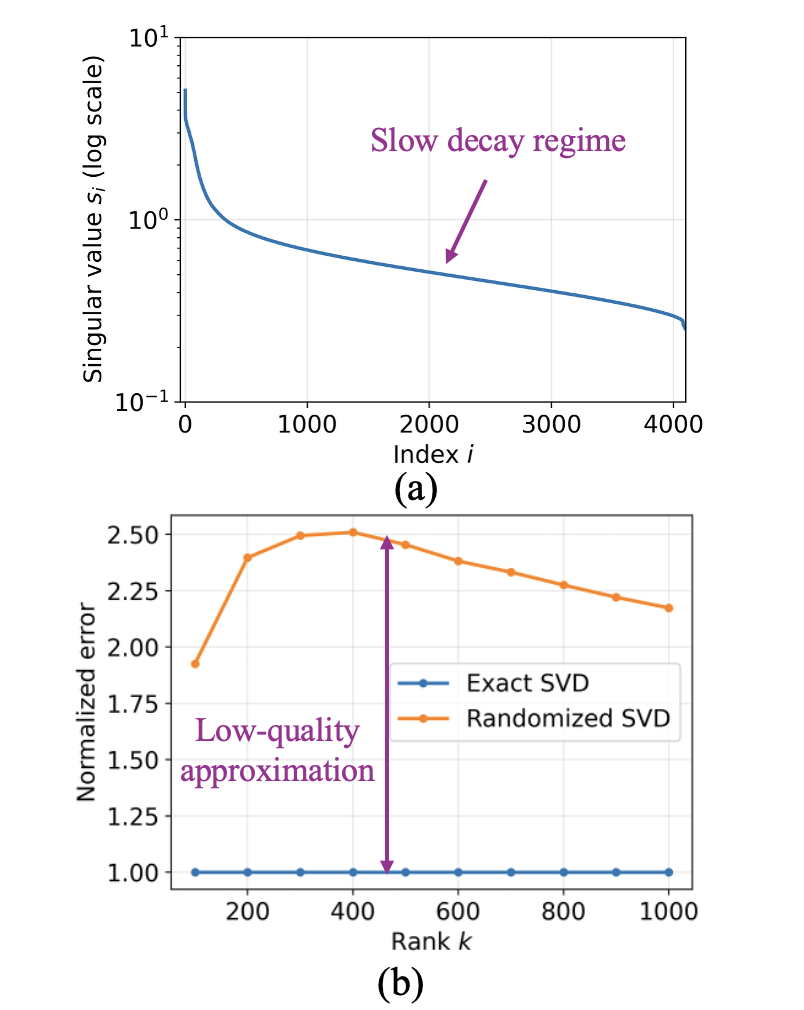}
\vspace{-5mm}
    \caption{Singular value spectrum and normalized spectral error for a layer of VGG with size 4096 $\times$ 25088.}
    \label{fig:demo}
\end{figure}

Motivated by the dual goals of understanding how low-rank approximation quality impacts predictive performance and developing more accurate compression methods, we make two primary contributions in this paper. First, we develop a theoretical analysis of softmax perturbation under low-rank approximation, showing that deviations in predicted class probabilities are bounded by the spectral approximation error of the underlying weight matrices. This result establishes a direct link between approximation quality and classification reliability, offering a principled foundation for analyzing the reliability of compressed models.

Second, we propose randomized subspace iteration (RSI) \cite{tropp2023randomized} as a more accurate and robust alternative to RSVD for compressing pretrained models. By incorporating power iterations, RSI amplifies spectral separation, enabling significantly improved approximation quality in regimes with slow singular value decay, which is common in large pretrained models.



\section{Problem Formulation and Preliminaries.}
Let $W$ be a $C \times D$ matrix, and $D > C$. This assumption is motivated by our focus on compressing the final classification layer, where the feature dimension $D$ is typically larger than the number of classes $C$. However, the discussion and results presented in this paper are general and apply to any matrix-valued ``linear'' layer, and do not depend on the specific dimensions of $W$. 

The matrix $W$ can be written using the singular value decomposition (SVD) as
\begin{equation}
W = U S V^T = \sum_{i=1}^C s_i\, u_i v_i^T.
\end{equation}
Here, $U=[u_1,\ldots,u_C] \in \mathbb{R}^{C \times C}$ and $V=[v_1,\ldots,v_C] \in \mathbb{R}^{D \times C}$ contain the left and right singular vectors, respectively, and $S \in \mathbb{R}^{C \times C}$ is a diagonal matrix with entries $s_1 \geq s_2 \geq \cdots \geq s_C \geq 0$. Each singular value $s_i$ indicates the importance of its corresponding direction. 

The spectral norm of $W$, denoted by $\|W\|_2$, is defined as the largest singular value of $W$, i.e., $\|W\|_2 = s_1$. This norm measures the maximum scaling effect of the linear transformation induced by $W$.

A key property of the SVD is that it provides the optimal low-rank approximation of a matrix. In particular, for any $k < \mathrm{rank}(W)$, the best rank-$k$ approximation of $W$ is obtained by truncating its SVD
\begin{equation}
W_k = \sum_{i=1}^k s_i\, u_i v_i^T.
\end{equation}
This approximation is optimal in the sense that it minimizes the approximation error among all rank-$k$ matrices, i.e.,
\begin{equation}
W_k = \argmin_{\mathrm{rank}(X) \leq k} \|W - X\|_2.
\end{equation}
We are often interested in how much information is lost when using the low-rank approximation
\begin{equation}
W - W_k = \sum_{i>k} s_i u_i v_i^T.
\end{equation}
Hence, for the spectral norm, we have $\|W - W_k\|_2 = s_{k+1}$. This explains why in Figure \ref{fig:demo}, the normalized error for the exact SVD, defined as $\|W - W_k\|_2 / s_{k+1}$, is equal to 1 for all values of $k$.

However, computing the exact SVD is computationally expensive. The cost scales as $\mathcal{O}(DC^2 )$, making it inefficient for large matrices or settings where decompositions must be computed repeatedly (e.g., across layers or multiple models). 
 This motivates the use of randomized methods that approximate the leading singular components of $W$ at a lower computational cost.

 One of the most widely used methods in this class is randomized SVD (RSVD) \cite{halko2011finding}. The main idea is to identify the dominant directions in the range of $W$ by applying the matrix to a set of random vectors. Specifically, we draw a matrix $\Omega \in \mathbb{R}^{D \times k}$, whose entries are sampled from a standard normal distribution; see \cite{saibaba2025randomized} for alternative choices. We then compute
\begin{equation}
X = W \Omega \in \mathbb{R}^{C \times k},
\end{equation}
which captures the action of $W$ on a low-dimensional random subspace and is strongly aligned with its leading left singular vectors. Next, we orthonormalize the columns of $X$ (e.g., via qr decomposition \cite{\cite{pourkamali2019improved}}), and compute
$Y = W^T X \in \mathbb{R}^{D \times k}$.
We then perform the exact SVD of the smaller matrix $Y^T \in \mathbb{R}^{k \times D}$
\begin{equation}
[\hat{U}, \widetilde{S}, \widetilde{V}] = \mathrm{svd}(Y^T),
\end{equation}
and finally set $\widetilde{U} = X \hat{U}$. The resulting factors approximate the leading singular vectors and singular values of the input matrix $W$.  This yields a rank-$k$ approximation of $W$ of the form
$\widetilde{W}_k = \widetilde{U}\,\widetilde{S}\,\widetilde{V}^T$. 

An important observation is that the dominant computational cost of RSVD scales as $\mathcal{O}(C D k)$, which is linear in both dimensions $C$ and $D$ for a fixed target rank. As a result, RSVD is well aligned with modern computing architectures, such as GPUs, which are highly optimized for matrix-matrix operations. This is one of the main reasons behind the popularity of RSVD, which has been widely used across various domains.

However, as illustrated in Figure \ref{fig:demo}, the singular value spectrum of layers in pretrained models often exhibits slow decay. In such regimes, the standard RSVD struggles to produce accurate low-rank approximations, since its performance depends on the separation between leading and trailing singular values. When the spectral decay is slow, the tail singular values remain significant, which leads to larger approximation errors unless the target rank $k$ is chosen sufficiently large \cite{tropp2023randomized}.

Therefore, in the next section, we introduce a more flexible variant of RSVD that enhances low-rank approximation quality by amplifying spectral separation.

\section{Proposed Approach.}
In this work, we consider compressing a pretrained linear layer by replacing its weight matrix $W \in \mathbb{R}^{C \times D}$ with a low-rank approximation. Specifically, we approximate $W$ by a rank-$k$ factorization of the form $AB$, where $A \in \mathbb{R}^{C \times k}$ and $B \in \mathbb{R}^{k \times D}$. This low-rank factorization can be obtained from the singular value decomposition (SVD) $W = U S V^T$ by setting $A = U S^{1/2}$ and $B = S^{1/2} V^T$. When using randomized methods, these factors are approximated accordingly. This formulation enables replacing a single linear layer with two smaller linear layers, thereby reducing the parameter count.

While the truncated SVD provides the optimal rank-$k$ approximation, computing it exactly is computationally expensive for large models. Moreover, as discussed earlier, the singular value spectra of pretrained models often exhibit slow decay. In such cases, RSVD leads to significant loss in approximation quality.

To address this challenge, we propose randomized subspace iteration (RSI), a principled approach that improves low-rank approximation quality by amplifying dominant spectral components while attenuating less significant ones. In addition, we develop a theoretical analysis of softmax perturbations under low-rank approximation, providing further justification for the effectiveness of RSI in compressing pretrained models.

\subsection{Randomized Subspace Iteration}
As discussed earlier, the performance of randomized low-rank approximation methods depends on the decay of the singular values of \( W \). When the spectrum exhibits slow decay, standard RSVD fails to accurately capture the dominant singular subspace.

To address this limitation, we employ randomized subspace iteration (RSI) \cite{gu2015subspace,saibaba2019randomized,tropp2023randomized}, which improves approximation quality by amplifying the contribution of dominant singular values. Recall that 
$W = \sum_{i=1}^{C} s_i u_i v_i^T$ denote the SVD of \( W \). For a random vector \( \omega \sim \mathcal{N}(0, I_D) \) (e.g., a column of \( \Omega \)), we have
\begin{equation}
W \omega = \sum_{i=1}^{C} Z_i s_i u_i,
\end{equation}
where we can show that \( Z_i \sim \mathcal{N}(0,1) \). Applying additional power iterations yields
\begin{equation}
(WW^T)^{q-1} W \omega = \sum_{i=1}^{C} Z_i s_i^{2q-1} u_i.
\end{equation}
This expression highlights that increasing the iteration count \( q \) amplifies the contribution of larger singular values while suppressing smaller ones. 
Notably, when \( q = 1 \), this procedure reduces to standard RSVD. For \( q > 1 \), RSI introduces a controllable mechanism to enhance spectral separation, which is particularly beneficial in regimes with slowly decaying singular values. 

We summarize the RSI procedure in Algorithm~\ref{alg:rsi}. While increasing \( q \) results in additional matrix multiplications involving \( W \) and \( W^T \), this cost is well aligned with modern computing architectures, which are highly optimized for such operations. More importantly, as we demonstrate in the next section, increasing \( q \) can significantly enhance the quality of the low-rank approximation. We further validate this effect through experimental results that highlight the critical role of \( q \).
\begin{algorithm}[h]
\caption{Randomized Subspace Iteration (RSI)} \label{alg:rsi}
\begin{algorithmic}[1]
\REQUIRE Weight matrix $W \in \mathbb{R}^{C \times D}$, target rank $k$, iteration count $q\geq 1$
\STATE Draw a random matrix $\Omega \in \mathbb{R}^{D \times k}$ and set $Y=\Omega$
\FOR{$t = 1, \dots, q$}
    \STATE $X = W Y$
    \STATE $[X, \textunderscore] = \mathrm{qr}(X)$
    \STATE $Y = W^T X$
\ENDFOR
\STATE $[\hat{U}, \widetilde{S}, \widetilde{V}] = \mathrm{svd}(Y^T)$
\STATE $\widetilde{U} = X \hat{U}$
\RETURN $\widetilde{U}, \widetilde{S}, \widetilde{V}$
\end{algorithmic}
\end{algorithm}

\subsection{Theoretical Guarantees.}
We now analyze the impact of low-rank approximation on the output probabilities of a neural network classifier. In particular, we quantify how perturbations in the weight matrix $W$ affect the softmax outputs.

\begin{lemma}[Jacobian of the softmax map]
Let $\sigma:\mathbb{R}^C \to \mathbb{R}^C$ be the softmax function defined by
\begin{equation}
\sigma_i(u)=\frac{e^{u_i}}{\sum_{j=1}^C e^{u_j}},
\qquad i=1,\dots,C.
\end{equation}
Then the Jacobian of $\sigma$ at $u\in\mathbb{R}^C$ is given by
\begin{equation}
J_\sigma(u)=\operatorname{diag}(\sigma(u))-\sigma(u)\sigma(u)^T.\label{eq:jacob}
\end{equation}
\end{lemma}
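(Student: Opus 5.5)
The plan is to compute the partial derivatives $\partial \sigma_i/\partial u_j$ entrywise and then assemble them into matrix form. Write $Z(u)=\sum_{l=1}^C e^{u_l}$, so that $\sigma_i(u)=e^{u_i}/Z(u)$. The partial derivative of the normalizer is $\partial Z/\partial u_j = e^{u_j}$, because only the $j$-th term of the sum depends on $u_j$. This is the one fact the whole computation rests on.

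Next I apply the quotient rule to $\sigma_i = e^{u_i}/Z$:
\begin{equation*}
\frac{\partial \sigma_i}{\partial u_j}
=\frac{\delta_{ij}\,e^{u_i}\,Z - e^{u_i}\,e^{u_j}}{Z^2}
=\delta_{ij}\,\sigma_i(u)-\sigma_i(u)\,\sigma_j(u),
\end{equation*}
where $\delta_{ij}$ is the Kronecker delta. This accounts for the numerator derivative being $e^{u_i}$ when $i=j$ and $0$ otherwise. Equivalently, one can differentiate $\log\sigma_i = u_i - \log Z$ to get $\partial \log\sigma_i/\partial u_j = \delta_{ij}-\sigma_j$, and then multiply by $\sigma_i$. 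Either route gives the same entrywise formula.

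Finally, I read off the matrix form. The term $\delta_{ij}\sigma_i$ is the $(i,j)$ entry of $\operatorname{diag}(\sigma(u))$. The term $\sigma_i\sigma_j$ is the $(i,j)$ entry of the outer product $\sigma(u)\sigma(u)^T$. Taking the Jacobian with the convention $[J_\sigma(u)]_{ij}=\partial\sigma_i/\partial u_j$, this gives \eqref{eq:jacob}. Because the resulting matrix is symmetric, the choice of row or column convention does not matter.

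There is no genuine obstacle here. The only point needing care is the case split $i=j$ versus $i\neq j$, which the Kronecker delta handles, and checking that the two terms match the diagonal and outer-product matrices entrywise. For later use in bounding softmax perturbations, I would also note that $J_\sigma(u)$ is symmetric positive semidefinite with $J_\sigma(u)\mathbf{1}=0$.
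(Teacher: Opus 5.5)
Your proof is correct and follows essentially the same route as the paper: apply the quotient rule to $\sigma_i(u)=e^{u_i}/\sum_{\ell} e^{u_\ell}$ to get $\partial\sigma_i/\partial u_j=\sigma_i(u)\bigl(\delta_{ij}-\sigma_j(u)\bigr)$, then identify the two terms entrywise with $\operatorname{diag}(\sigma(u))$ and $\sigma(u)\sigma(u)^T$. Your extra remarks (the log-derivative alternative, symmetry, positive semidefiniteness, and $J_\sigma(u)\mathbf{1}=0$) are correct but not needed for the lemma.
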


\begin{proof}
Let
\[
s(u)=\sum_{\ell=1}^C e^{u_\ell},
\qquad
\sigma_i(u)=\frac{e^{u_i}}{s(u)}.
\]
Fix $i,j\in\{1,\dots,C\}$. By the quotient rule,
\[
\frac{\partial \sigma_i(u)}{\partial u_j}
=
\frac{\delta_{ij} e^{u_i}s(u)-e^{u_i}e^{u_j}}{s(u)^2}.
\]
where $\delta_{ij}$ is the Kronecker delta defined by
\[
\delta_{ij} =
\begin{cases}
1, & \text{if } i=j,\\
0, & \text{if } i\neq j.
\end{cases}
\]
Using
\[
\frac{e^{u_i}}{s(u)}=\sigma_i(u),
\qquad
\frac{e^{u_j}}{s(u)}=\sigma_j(u),
\]
we obtain
\[
\frac{\partial \sigma_i(u)}{\partial u_j}
=
\sigma_i(u)\bigl(\delta_{ij}-\sigma_j(u)\bigr).
\]
Stacking these entries yields
\[
J_\sigma(u)=\operatorname{diag}(\sigma(u))-\sigma(u)\sigma(u)^T.
\]
\end{proof}

\begin{theorem}[Softmax perturbation under low-rank approximation]
Let $h:\mathcal{X} \to \mathbb{R}^D$ be a fixed feature extractor and consider
\begin{equation}
z(x) = W h(x) + b, 
\qquad 
\widetilde{z}(x) = \widetilde{W} h(x) + b,
\end{equation}
where $W \in \mathbb{R}^{C \times D}$ is the pretrained weight matrix and $\widetilde{W}$ is a low-rank approximation of $W$. Assume that
\begin{equation}
\|h(x)\|_2 \le R \qquad \text{for all } x \in \mathcal{X}.
\end{equation}
Define the following probability vectors in $\mathbb{R}^C$
\begin{equation}
p(x) = \mathrm{softmax}(z(x)), 
\qquad 
\widetilde{p}(x) = \mathrm{softmax}(\widetilde{z}(x)).
\end{equation}
Then, for every $x \in \mathcal{X}$,
\begin{equation}
\|\widetilde{p}(x) - p(x)\|_\infty
\le
\frac{1}{2} \, R \, \|W - \widetilde{W}\|_2.
\end{equation}
In particular, for every class $c\in\{1,\ldots,C\}$,
\begin{equation}
|\widetilde{p}_c(x) - p_c(x)|
\le
\frac{1}{2} \, R \, \| W- \widetilde{W}\|_2.
\end{equation}
\end{theorem}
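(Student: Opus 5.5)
The plan is to apply the mean value theorem along the segment joining the two logit vectors and use the Jacobian formula \eqref{eq:jacob} from the preceding lemma. Fix $x$ and set $\Delta = \widetilde{z}(x)-z(x) = (\widetilde{W}-W)h(x)$. The bias cancels, so
\[
\|\Delta\|_2 \le \|W-\widetilde{W}\|_2\,\|h(x)\|_2 \le R\,\|W-\widetilde{W}\|_2 .
\]
For each fixed class $c$, define the scalar function $\phi(t)=\sigma_c(z(x)+t\Delta)$ on $[0,1]$. Since $\phi$ is smooth and scalar, the mean value theorem gives some $\tau\in(0,1)$ such that
\[
\widetilde{p}_c(x)-p_c(x)=\phi'(\tau)=e_c^T J_\sigma(u)\,\Delta,
\qquad u=z(x)+\tau\Delta .
\]
Working class by class avoids the fact that the mean value theorem fails for vector-valued maps.

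The key step is bounding the Euclidean norm of the $c$-th row of $J_\sigma(u)$ uniformly in $u$. By the lemma, with $\sigma=\sigma(u)$, this row is $g=\sigma_c(e_c-\sigma)$. Expanding,
\[
\|g\|_2^2=\sigma_c^2\bigl((1-\sigma_c)^2+\textstyle\sum_{j\ne c}\sigma_j^2\bigr).
\]
Since $\sigma_j\ge0$ and $\sum_{j\ne c}\sigma_j=1-\sigma_c$, we have $\sum_{j\ne c}\sigma_j^2\le(1-\sigma_c)^2$. Hence $\|g\|_2^2\le 2\sigma_c^2(1-\sigma_c)^2$, that is, $\|g\|_2\le\sqrt2\,\sigma_c(1-\sigma_c)\le \sqrt2/4$.

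Combining this with Cauchy--Schwarz gives
\[
|\widetilde{p}_c(x)-p_c(x)|\le \tfrac{\sqrt2}{4}\,R\,\|W-\widetilde{W}\|_2 .
\]
Since $\sqrt2/4<1/2$, this is already sharper than the claimed constant $1/2$. Alternatively, the bound $\|J_\sigma(u)\|_2\le 1/2$ follows from Gershgorin's theorem, because the diagonal entry plus the off-diagonal absolute row sum is $2\sigma_i(1-\sigma_i)\le 1/2$. The row-wise bound is still needed, however, to avoid applying the vector mean value theorem. Taking the maximum over $c$ yields the $\ell_\infty$ statement, and the per-class statement follows immediately.

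The main obstacle is handling the mean value argument correctly for a vector-valued function. I will resolve it with the per-coordinate scalar reduction above, or equivalently by writing the difference as the integral $\int_0^1 J_\sigma(z+t\Delta)\Delta\,dt$ and using a uniform bound on the operator norm of $J_\sigma$. The remaining work is the elementary inequality on the row norm.
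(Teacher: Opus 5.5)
Your proof is correct, and its skeleton is the paper's. Both arguments cancel the bias, bound the logit change by $R\|W-\widetilde W\|_2$, apply the scalar mean value theorem to each coordinate $\sigma_c$ along the segment, and bound the $c$-th row of $J_\sigma$ uniformly.

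You differ from the paper in the dual-norm pairing at the last step.
\begin{itemize}
\item The paper passes to $\|\widetilde z(x)-z(x)\|_\infty\le\|\widetilde z(x)-z(x)\|_2$. It then uses H\"older with the $\ell_1$ norm of the row, which equals $2\sigma_c(1-\sigma_c)\le\frac{1}{2}$.
\item You stay in $\ell_2$ and use Cauchy--Schwarz with the $\ell_2$ norm of the row, $\sqrt{2}\,\sigma_c(1-\sigma_c)\le\sqrt{2}/4$.
\end{itemize}

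Your choice improves the constant by a factor of exactly $\sqrt{2}$. Moreover, $\sqrt{2}/4$ is sharp: take $C=2$, $\sigma_c=\frac{1}{2}$, and a small logit perturbation along $e_1-e_2$. The paper's chain loses this factor because $e_1-e_2$ is extremal for the $\ell_1$/$\ell_\infty$ pairing, and in that direction $\|\Delta\|_\infty\le\|\Delta\|_2$ is off by $\sqrt{2}$.

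In exchange, the paper's route yields the intermediate estimate $\|\sigma(\widetilde z)-\sigma(z)\|_\infty\le\frac{1}{2}\|\widetilde z-z\|_\infty$. This is an $\ell_\infty$ Lipschitz bound for softmax, itself sharp. It stays useful when logit errors are controlled entrywise rather than through a spectral norm.

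Your Gershgorin alternative reuses the paper's row-sum computation to get $\|J_\sigma(u)\|_2\le\frac{1}{2}$. This is valid because $J_\sigma(u)$ is symmetric with nonnegative diagonal. Combined with the integral representation, it even gives $\|\widetilde p(x)-p(x)\|_2\le\frac{1}{2}R\|W-\widetilde W\|_2$. That integral route is rigorous for vector-valued maps as it stands. So, contrary to your remark, the row-wise bound is not needed there.
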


\begin{proof}
Let $\Delta W := \widetilde{W} - W$. Since the bias term is the same in both logits, we have
\[
\widetilde{z}(x) - z(x) = \Delta W \, h(x).
\]
Therefore,
\[
\|\widetilde{z}(x) - z(x)\|_\infty
\le
\|\widetilde{z}(x) - z(x)\|_2
=
\|\Delta W \, h(x)\|_2.
\]
Using the properties of the spectral norm,
\[
\|\Delta W \, h(x)\|_2
\le
\|\Delta W\|_2 \, \|h(x)\|_2
\le
R \, \|\Delta W\|_2.
\]
Hence,
\begin{equation}
\|\widetilde{z}(x) - z(x)\|_\infty
\le
R \, \|\widetilde{W} - W\|_2.\label{eq:bound1}
\end{equation}
Let $\sigma:\mathbb{R}^C \to \mathbb{R}^C$ denote the softmax map. Its Jacobian at $u \in \mathbb{R}^C$ is given by \eqref{eq:jacob}
\[
J_\sigma(u) = \operatorname{diag}(\sigma(u)) - \sigma(u)\sigma(u)^T.
\]
For any row $i$, we have
\[
(J_\sigma(u))_{ii} = \sigma_i(u)\bigl(1-\sigma_i(u)\bigr),
\]
and 
\[
(J_\sigma(u))_{ij} = -\sigma_i(u)\sigma_j(u)
\quad (j\neq i).
\]
Thus the absolute row sum is
\[
\sum_{j=1}^C |(J_\sigma(u))_{ij}|
=
\sigma_i(u)\bigl(1-\sigma_i(u)\bigr)
+
\sum_{j\neq i}\sigma_i(u)\sigma_j(u).
\]
Since $\sum_{j\neq i}\sigma_j(u)=1-\sigma_i(u)$, this becomes
\[
2\sigma_i(u)\bigl(1-\sigma_i(u)\bigr).
\]
Now $\sigma_i(u)\in[0,1]$, and
\[
\max_{t\in[0,1]} 2t(1-t)=\frac{1}{2}.
\]
Therefore, for all  $u\in\mathbb{R}^C$  and all  $i$,
\begin{equation}
\sum_{j=1}^C |(J_\sigma(u))_{ij}|
=
2\sigma_i(u)\bigl(1-\sigma_i(u)\bigr)
\le \frac{1}{2}. \label{eq:bound2}
\end{equation}
We now bound each coordinate of $\sigma$ separately. Fix any index $i\in\{1,\ldots,C\}$.
The mean value theorem \cite{rudin1976principles} applied to $\sigma_i$ along the segment from $z(x)$ to $\widetilde{z}(x)$ gives
\[
|\sigma_i(\widetilde{z}(x)) - \sigma_i(z(x))|
=
\bigl|\nabla_u \sigma_i(\xi_i)^\top (\widetilde{z}(x) - z(x))\bigr|,
\]
for some $\xi_i$ on the segment between $z(x)$ and $\widetilde{z}(x)$. Applying the dual
pairing $|\langle g,v\rangle| \le \|g\|_1\|v\|_\infty$,
\[
|\sigma_i(\widetilde{z}(x)) - \sigma_i(z(x))|
\le
\|\nabla_u \sigma_i(\xi_i)\|_1 \, \|\widetilde{z}(x) - z(x)\|_\infty.
\]
Since $\|\nabla_u \sigma_i(u)\|_1$ is the $i$-th absolute row sum of $J_\sigma(u)$, the bound above and the one in \eqref{eq:bound2} lead to
\[
|\sigma_i(\widetilde{z}(x)) - \sigma_i(z(x))|
\le
\frac{1}{2}\|\widetilde{z}(x) - z(x)\|_\infty.
\]
Taking the maximum over all $i\in\{1,\ldots,C\}$,
\begin{equation}
\|\sigma(\widetilde{z}(x)) - \sigma(z(x))\|_\infty
\le
\frac{1}{2}\|\widetilde{z}(x) - z(x)\|_\infty. \label{eq:bound3}
\end{equation}
Hence, using \eqref{eq:bound1} and \eqref{eq:bound3}, we get
\[
\|\widetilde{p}(x)-p(x)\|_\infty
\le
\frac{1}{2} \,\|\widetilde{z}(x)-z(x)\|_\infty
\le
\frac{1}{2} \, R \,\|\widetilde{W}-W\|_2.
\]
This proves the theorem.
\end{proof}

\begin{remark}[Interpretation of the spectral error term]
The bound above shows that the perturbation in the predicted class probabilities is controlled directly by the spectral approximation error $\|W-\widetilde W\|_2$.

If $\widetilde W$ is the best rank-$k$ approximation to $W$ obtained from the truncated singular value decomposition,
\[
\|W-\widetilde W_k\|_2 = s_{k+1},
\]
where $s_{k+1}$ denotes the $(k+1)$-st singular value of $W$. Therefore,
\begin{equation}
\|\widetilde p(x)-p(x)\|_\infty
\le
\frac{1}{2} R\,s_{k+1}.
\end{equation}

When $\widetilde W$ is produced by RSI, Theorem 9.1 of \cite{tropp2023randomized} provides a bound on the expected approximation error. In particular, the logarithm of the normalized expected error decays at rate $O(1/m)$, where $m$ is the number of matrix multiplications with $W$ and $W^T$. This implies that increasing the number of iterations $q$ improves the quality of the recovered low-rank approximation. More precisely, RSI yields a bound of the form
\begin{equation}
\log\!\left(\frac{\mathbb{E}\|W-\widetilde W\|_2^2}{s_{k+1}^2}\right)
\le
\frac{1}{m-1}\log H,
\end{equation}
where $H > 1$ mainly depends on the spectrum of $W$, and the expectation $\mathbb{E}$ is taken with respect to the randomness in the algorithm (i.e., drawing the random matrix $\Omega$). Exponentiating both sides gives
\[
\mathbb{E}\|W-\widetilde W\|_2^2
\le
s_{k+1}^2 \, H^{\frac{1}{m-1}}.
\]
Thus, as $m$ increases, the factor $H^{1/(m-1)}$ approaches $1$, implying that the randomized approximation error converges to the optimal error. 
\end{remark}

\section{Experiments.} In this section, we evaluate the effectiveness of randomized low-rank approximation methods for compressing pretrained vision models. We consider two representative architectures: VGG19 and ViT-B/32 \cite{dosovitskiy2021an} . These pretrained models are obtained from the PyTorch Hub \cite{pytorch_hub}. This selection allows us to study both a traditional convolutional neural network and a modern transformer-based architecture.

The VGG19 model consists of convolutional feature extraction modules followed by a classifier composed of 3 fully connected (linear) layers. In contrast, ViT-B/32 (Vision Transformer Base with 32$\times$32 patch size) is a transformer-based architecture that processes images as sequences of patches and relies heavily on linear layers within its encoder blocks, including projection and feedforward components. As such, it provides a  more modern setting for evaluating randomized methods.

We begin our analysis by focusing on a single representative layer from each model. Specifically, we examine the approximation quality of randomized SVD (RSVD), which corresponds to randomized subspace iteration (RSI) with iteration count $q=1$, and compare it with RSI for higher iteration counts $q\in\{2,3,4\}$ (see Algorithm \ref{alg:rsi}). Hence, we systematically investigate the trade-off between approximation quality and computational cost as controlled by the parameter 
$q$. All experiments are conducted on a single A100 GPU.

In the second phase of our experiments, we extend this analysis to an end-to-end compression setting by applying low-rank approximation to all linear layers within each model. We then evaluate the resulting trade-offs between predictive performance and model compression, where compression is quantified by the reduction in the number of parameters. This provides a comprehensive assessment of the practical impact of randomized low-rank methods, demonstrating their applicability beyond individual layers to full model compression. For evaluation, we use the Imagenette dataset, which is a curated subset of ImageNet containing 10 classes. 

Importantly, our experiments do not involve any retraining or fine-tuning. The goal is to evaluate whether a compressed version of a pretrained model can be used directly as an off-the-shelf classifier on similar test data, thereby isolating the effect of low-rank approximation on predictive performance.

\subsection{Single-Layer Analysis.}
We begin by analyzing the largest linear layer in the classifier module of \textsc{VGG19}, which has dimensions \(4{,}096 \times 25{,}088\) and approximately 102.76M parameters. This layer provides a challenging and representative setting for evaluating randomized low-rank approximation methods.

Since randomized methods involve sampling random matrices, we repeat each experiment 20 times and report the mean normalized error and runtime for each rank \(k\). The normalized error is defined as the spectral norm of the approximation error at rank \(k\), normalized by the \((k+1)\)-th singular value. By optimality of the SVD, this quantity is equal to 1 for the exact SVD. Thus, our goal is to assess whether increasing the iteration parameter \(q\) yields errors closer to this value.

For runtime, we compute the exact SVD once, as the full decomposition enables efficient construction of any rank-\(k\) approximation via matrix multiplications. In contrast, randomized methods construct the approximation directly for each \(k\), resulting in rank-dependent computational cost.

As shown in Figure~\ref{fig:layer_vgg}(a), increasing the iteration count to \(q=2\), which introduces an additional step of attenuating smaller singular values, leads to a significant improvement in approximation quality. While the normalized error is approximately 2 for RSVD (i.e., RSI with \(q=1\)), it drops to around 1.3 when \(q=2\). Moreover, the error continues to decrease as \(q\) increases, although the marginal gains become smaller. Notably, for \(q=4\), the normalized error is approximately 1.1, indicating that the approximation is very close to the optimal value of 1.

\begin{figure}[htbp!]
    \centering    \includegraphics[width=0.4\linewidth]{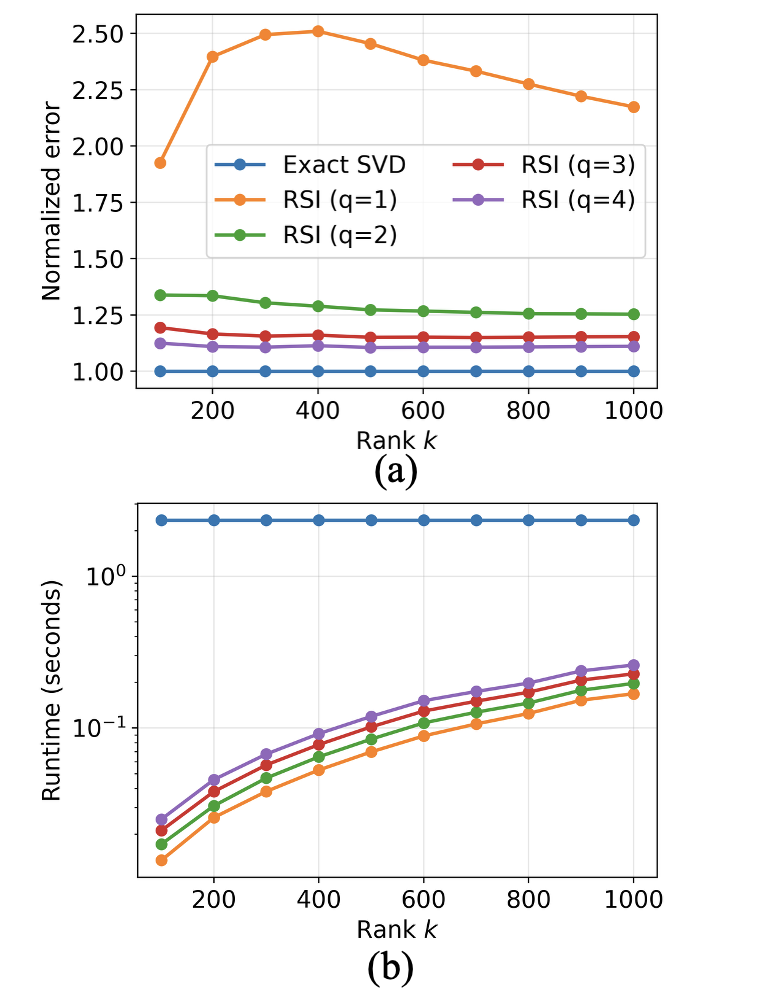}
    \vspace{-5mm}
    \caption{Normalized error and runtime for low-rank approximation of a single layer in VGG19, comparing SVD and RSI across ranks $k$ and iteration counts $q$.}
    \label{fig:layer_vgg}
\end{figure}

Additionally, Figure~\ref{fig:layer_vgg}(b) reports the runtime comparison. Computing the exact SVD on the GPU requires approximately 2.33 seconds. In contrast, RSI achieves substantial computational savings across all ranks. At \(k=200\), RSI with \(q=2\) and \(q=4\) are approximately \(76\times\) and \(51\times\) faster, respectively. Notably, even at the largest rank considered (\(k=1000\)), RSI maintains a speedup of approximately one order of magnitude. These results highlight that RSI provides significant efficiency gains while delivering near-optimal approximation quality.

Next, we consider a linear layer from the ViT architecture, taken from one of its encoder blocks. In transformer models, such layers are part of the feedforward module, where the input features are first expanded to a higher-dimensional space and then projected back, enabling richer feature representations. The corresponding weight matrix has dimensions \(768 \times 3{,}072\), resulting in approximately 2.3M parameters, which is smaller than the layer considered in VGG19. However, transformer-based architectures contain many such linear layers; in this model, there are a total of 37 linear layers. Consequently, compressing these layers can lead to a substantial reduction in overall model size.

\begin{figure}[htbp!]
    \centering
\includegraphics[width=0.44\linewidth]{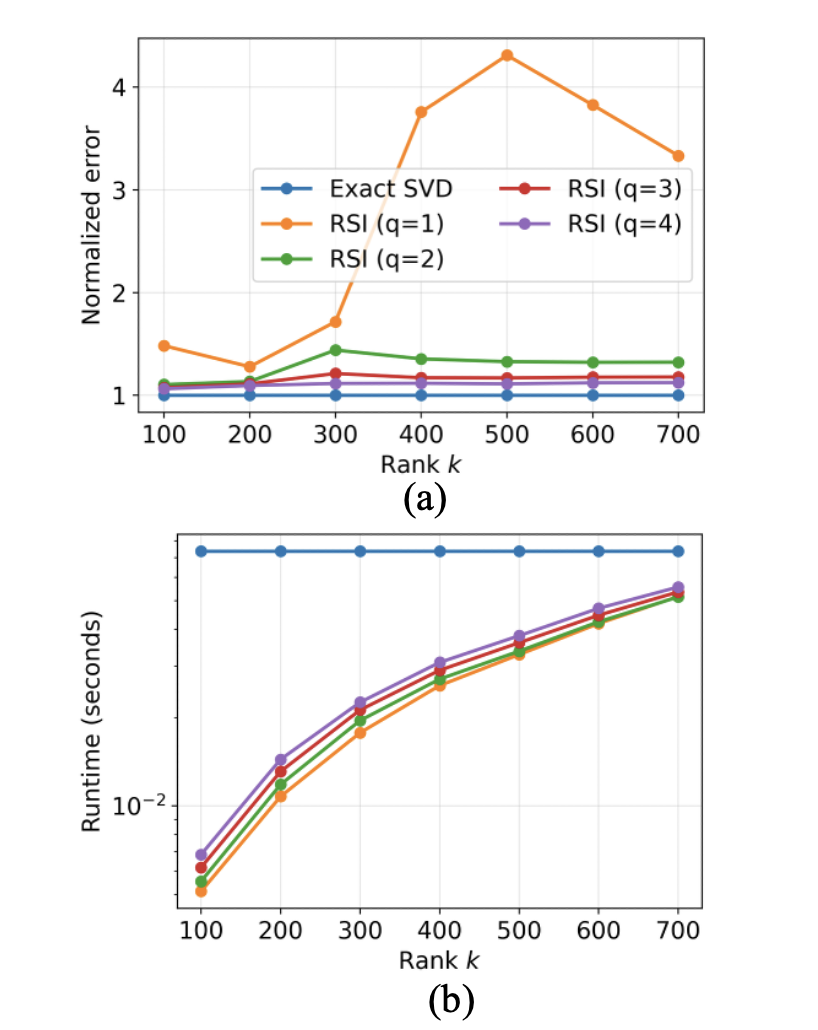}\vspace{-5mm}\caption{Normalized error and runtime for low-rank approximation of a single layer in ViT.}
\label{fig:layer_vit}
\end{figure}

Based on Figure~\ref{fig:layer_vit}(a), we observe that RSVD (equivalently, RSI with \(q=1\)) fails to provide accurate approximations across ranks. In particular, the normalized error exceeds 4 for \(k=500\), highlighting the need for increasing the iteration parameter \(q\). In contrast, for \(q=3\) and \(q=4\), the normalized error consistently remains below 1.2, demonstrating that RSI can achieve accurate low-rank approximations.

\begin{table*}[t]
\centering
\caption{End-to-end compression results for VGG19 and ViT. We report compression time (seconds), compression ratio, and Top-1/Top-5 accuracy for different compression factors \(\alpha\) and RSI iteration counts \(q\).}
\label{tab:compression_results}
\begin{tabular}{cccccc|cccccc}
\toprule
\multicolumn{6}{c|}{\textbf{VGG19}} & \multicolumn{6}{c}{\textbf{ViT-B/32}} \\
\midrule
\(\alpha\) & \(q\) & Time & Ratio & Top-1 & Top-5 &
\(\alpha\) & \(q\) & Time & Ratio & Top-1 & Top-5 \\
\midrule

0.8 & 1 & 3.48  & 1.02  & 82.40\% & 96.33\% & 0.8 & 1 & 1.63 & 1.01 & 86.98\% & 98.15\% \\
0.8 & 2 & 3.53 & 1.02 & 82.60\% & 96.47\% & 0.8 & 2 & 1.65 & 1.01 & 89.19\% & 98.48\% \\
0.8 & 3 & 3.60 & 1.02 & 82.65\% & 96.49\% & 0.8 & 3 & 1.73 & 1.01 & 89.59\% & 98.52\% \\
0.8 & 4 & 3.67 & 1.02 & 82.67\% & 96.50\% & 0.8 & 4 & 1.81 & 1.01 & 89.73\% & 98.55\% \\
\cmidrule(lr){1-6} \cmidrule(lr){7-12}

0.6 & 1 & 2.17 & 0.80 & 80.50\% & 95.75\% & 0.6 & 1 & 1.13 & 0.84 & 76.17\% & 93.97\% \\
0.6 & 2 & 2.26  & 0.80 & 82.13\% &  96.27\% & 0.6 & 2 & 1.16 & 0.84 & 85.65\% & 97.81\% \\
0.6 & 3 & 2.33 & 0.80 & 82.41\% & 96.29\% & 0.6 & 3 & 1.23 & 0.84 & 86.21\% & 97.96\% \\
0.6 & 4 & 2.41 & 0.80 & 82.48\% & 96.30\% & 0.6 & 4 & 1.29 & 0.84 & 86.37\% & 98.01\% \\
\cmidrule(lr){1-6} \cmidrule(lr){7-12}

0.4 & 1 & 1.18 & 0.58 & 75.02\% & 94.19\% & 0.4 & 1 & 0.71 & 0.68 & 16.82\% &  35.43\%\\
0.4 & 2 & 1.23 & 0.58 & 81.17\% & 95.95\% & 0.4 & 2 & 0.75 & 0.68 & 74.05\% & 93.76\% \\
0.4 & 3 & 1.30  & 0.58 & 81.47\% & 96.07\% & 0.4 & 3 & 0.79 & 0.68 & 76.56\% & 95.18\% \\
0.4 & 4 & 1.37 & 0.58 & 81.53\% & 96.07\% & \cellcolor{gray!15} 0.4 & \cellcolor{gray!15} 4 & \cellcolor{gray!15} 0.86 & \cellcolor{gray!15} 0.68 & \cellcolor{gray!15} 77.15\% & \cellcolor{gray!15} 95.54\% \\
\cmidrule(lr){1-6} \cmidrule(lr){7-12}

 0.2 &  1 &  0.49 &  0.36 &  59.27\%  &  87.36\% & 0.2 & 1 &  0.34& 0.51 & 0.07\% & 0.62\% \\
0.2 & 2 & 0.53 & 0.36 & 77.50\% & 94.86\% & 0.2 & 2 & 0.37 & 0.51 &1.96\%  & 7.52\% \\
0.2 & 3 & 0.57 & 0.36 & 78.39\% & 95.08\% & 0.2 & 3 & 0.39 & 0.51 & 4.87\% & 14.90\% \\
\cellcolor{gray!15} 0.2 & \cellcolor{gray!15} 4 & \cellcolor{gray!15} 0.61 & \cellcolor{gray!15} 0.36 & \cellcolor{gray!15} 78.63\% & \cellcolor{gray!15} 95.18\% & 0.2 & 4 & 0.43 & 0.51 & 6.43\% & 17.92\% \\

\bottomrule
\end{tabular}
\end{table*}

Figure~\ref{fig:layer_vit}(b) reports the runtime comparison. Despite the relatively small matrix size, RSI remains significantly faster than the exact SVD. The exact SVD requires approximately 0.07 seconds, whereas RSI with \(q=4\) and \(k=100\) takes about 0.007 seconds, corresponding to an order-of-magnitude speedup. These results again indicate that RSI with \(q>1\) achieves near-optimal approximation quality while offering substantial computational savings, particularly when the target rank \(k\) is much smaller than the matrix dimensions.

\subsection{End-to-End Compression of VGG19 and Vision Transformers.}
We present comprehensive results on the end-to-end compression of the two pretrained models. Both models are originally trained on the ImageNet dataset, which consists of \(C = 1{,}000\) classes. Although the evaluation dataset contains only a subset of 10 classes, we retain the original output layer with 1,000 neurons to faithfully assess the predictive performance of the pretrained models after compression. 

Given the large number of classes, it is standard to report Top-1 and Top-5 accuracy (in percentage). Top-1 accuracy measures the proportion of test samples for which the model's most confident prediction matches the true label, while Top-5 accuracy measures the proportion of test samples for which the true label appears among the model's five most confident predictions.

We apply RSI with varying iteration counts \(q\) to all linear layers. Specifically, VGG19 contains 3 such layers, while the ViT architecture includes 37 linear layers. Since these layers have different dimensions, we introduce a compression parameter \(0 < \alpha < 1\) to control the target rank for each weight matrix of size \(C \times D\), defined as \(k = \lceil \alpha\cdot \min(C, D) \rceil\). Smaller values of \(\alpha\) correspond to more aggressive compression and greater parameter reduction. Also, we note that for larger values of \(\alpha\), the number of parameters after low-rank approximation may slightly increase, as the resulting rank \(k\) approaches the original matrix dimensions.

The end-to-end compression results are summarized in Table~\ref{tab:compression_results}, where we report the total compression time across all linear layers, the compression ratio (defined as the fraction of parameters in the compressed model relative to the original model), and the Top-1 and Top-5 accuracy. For reference, the uncompressed models achieve Top-1/Top-5 accuracies of 82.57\%/96.51\% for VGG19 and 90.55\%/98.68\% for ViT.

For \(\alpha = 0.8\), all values of \(q\) yield strong performance, which is expected as this setting results in minimal compression. As \(\alpha\) decreases, the role of the iteration parameter \(q\) becomes more pronounced. In particular, for VGG19 at \(\alpha = 0.2\), RSI with \(q>1\) significantly improves performance, whereas RSVD (i.e., \(q=1\)) suffers from a substantial drop in accuracy. Overall, \(\alpha = 0.2\) and \(q = 4\) provide an effective trade-off for VGG19, achieving notable compression with limited degradation in predictive performance.

For the ViT model, we observe similar trends with respect to \(q\), where increasing the number of iterations consistently improves accuracy. However, $\alpha = 0.2$ leads to a noticeable performance drop. A more favorable trade-off is achieved at \(\alpha = 0.4\), where the reduction in Top-5 accuracy is limited to approximately 3\%.

Across both models, the total time required to compress all linear layers is under one second for the best-performing configurations. This highlights the computational efficiency of randomized methods, demonstrating their ability to deliver substantial compression with minimal overhead.

Finally, we note that our approach is complementary to low-rank adaptation methods such as LoRA \cite{hu2022lora}. While LoRA reduces the number of trainable parameters during fine-tuning by constraining updates to a low-rank subspace, it still operates on the full pretrained weight matrices. RSI can be applied to compress these backbone weights prior to or during adaptation, offering additional efficiency gains in memory and computation. This suggests a natural combination: applying RSI to obtain a compressed backbone, then using LoRA-style adaptation on top, yielding benefits from both directions.

\section{Conclusion.}
In this work, we presented both theoretical and empirical evidence demonstrating the advantages of randomized subspace iteration (RSI) over the widely used randomized singular value decomposition (RSVD). In particular, RSI achieves substantially improved approximation quality, leading to superior end-to-end compression performance for pretrained models under more aggressive compression regimes. Our findings suggest several promising directions for future research. First, developing adaptive strategies for selecting layer-wise ranks is especially important for transformer-based architectures, which contain a large number of linear layers with varying characteristics. Second, randomized low-rank approximations of pretrained weights offer a natural framework for the low-rank adaptation of pretrained models. Third, while our experiments focused on vision models, the proposed approach can be extended to a broader class of architectures, including vision-language and other multimodal models. In such settings, enabling modality-specific compression ratios may provide an effective pathway toward improving computational efficiency.

\bibliographystyle{siamplain}
\bibliography{example_references}
\end{document}